\DeclareMathOperator*{\argmax}{argmax}
\DeclareMathOperator*{\argmin}{argmin}
\theoremstyle{plain}
\newtheorem{theorem}{Theorem}[section]
\theoremstyle{definition}
\newtheorem{definition}[theorem]{Definition}
\theoremstyle{remark}
\author{
  Alkhouri, Ismail\\
  \small{Department of Electrical and Computer Engineering,\\
  University of Central Florida}
  \and
  Atia, George\\
  \small{Department of Electrical and Computer Engineering, \\
  University of Central Florida}
  \and
    Velasquez, Alvaro\\
  \small{{Information Directorate, Air Force Research Laboratory}}
}
\title{A Differentiable Approach to Combinatorial Optimization using Dataless Neural Networks}
\begin{document}

\maketitle




\begin{abstract}
    The success of machine learning solutions for reasoning about discrete structures has brought attention to its adoption within combinatorial optimization algorithms. Such approaches generally rely on supervised learning by leveraging datasets of the combinatorial structures of interest drawn from some distribution of problem instances. Reinforcement learning has also been employed to find such structures. In this paper, we propose a radically different approach in that no data is required for training the neural networks that produce the solution. In particular, we reduce the combinatorial optimization problem to a neural network and employ a dataless training scheme to refine the parameters of the network such that those parameters yield the structure of interest. We consider the combinatorial optimization problems of finding maximum independent sets and maximum cliques in a graph. In principle, since these problems belong to the \textbf{NP-hard} complexity class, our proposed approach can be used to solve any other \textbf{NP-hard} problem. \textcolor{black}{Additionally, we propose a universal graph reduction procedure to handle large scale graphs. The reduction exploits community detection for graph partitioning and is applicable to any graph type and/or density.} Experimental evaluation on both synthetic graphs and real-world benchmarks demonstrates that our method performs on par with or outperforms state-of-the-art heuristic, reinforcement learning, and machine learning based methods without requiring any data.
\end{abstract}

\section{Introduction}

In his seminal work \cite{karp1972reducibility}, Richard Karp demonstrated the reducibility among combinatorial problems that are complete for the complexity class \textbf{NP}. Combinatorial optimization problems have since been frequently associated with the \textbf{NP-hard} complexity class for which no efficient solutions are likely to exist. Despite their apparent intractability, \textbf{NP-hard} problems have found ubiquitious application across many sectors \cite{bengio2021machine}. 


A characteristic feature of this class of optimization problems is that an instance of one problem can be reduced in polynomial time to another \textbf{NP-hard} problem. Consequently, if a solution is found for one, it can be obtained for the other \textcolor{black}{\cite{li2018combinatorial}}. While there is not a known polynomial time solver with respect to (w.r.t.) the size of the input for any of these problems, there are many approximate and efficient solvers \cite{lamm2016finding}. In general, these solvers are broadly categorized into heuristic algorithms \cite{akiba2016branch}, conventional branch-and-bound methods \cite{san2011exact}, and approximation algorithms \cite{hochba1997approximation}.\par 


Recently, learning-based heuristics have emerged as effective means to solve combinatorial optimization problems \cite{he2014learning,li2018combinatorial}. However, these methods require extensive training of Neural Networks (NNs) using large graph datasets with known solutions. In contrast, we introduce dataless NNs (dNNs) and dataless training as a novel paradigm for solving \textbf{NP-hard} problems. Given a graph $G$, 
the key idea underlying our approach is reducing the combinatorial optimization problem to a NN with a connectivity structure derived from $G$, and whose input is data independent. The output of the NN is minimized upon finding a desired structure (e.g., a maximum independent set) and this structure can be constructed from the learned parameters of the NN.\par



The first contribution of this paper is the introduction of dNNs for which no data is required during training. The second contribution is the representation of \textbf{NP-hard} problems as a single differentiable function, thereby enabling the adoption of differentiable solutions to classic discrete optimization problems. \textcolor{black}{Third}, we construct dNN architectures for solving the maximum independent sets (MIS), maximum cliques (MC), and minimum vertex covers (MVC) problems. These problems can directly model many problems of interest. \textcolor{black}{Fourth, we develop a community detection based graph reduction procedure for large scale graphs. Unlike most common reductions rules whose applicability is limited to sparse graphs, this procedure is universal in that it is applicable to any graph type.} \textcolor{black}{Our fifth contribution is the introduction of an iterative solution improvement procedure based on simulated annealing and dNNs.} 
We evaluate our proposed approach using SNAP, citation networks benchmarks, and \textcolor{black}{synthetic graphs}, where we demonstrate that it performs on par with the state-of-the-art heuristic, reinforcement learning (RL), and machine learning (ML) based baselines with the added benefit that our solution does not require data.

\section{Related Work}

Exact algorithms for \textbf{NP-hard} problems are typically based on enumeration or branch-and-bound techniques. 
However, these techniques are not applicable to large problem spaces \cite{dai2016discriminative}. This motivated the development of efficient approximation algorithms and heuristics, such as the procedure implemented in the NetworkX library for solving MIS \cite{boppana1992approximating}. These polynomial time algorithms and heuristics typically utilize a combination of various sub-procedures, 
including greedy algorithms, local search sub-routines, and genetic procedures \cite{williamson2011design}. 
An algorithm that provably guarantees an approximate solution to MC within a factor of $n^{1 - \epsilon}$, where $n$ is the number of nodes in the underlying graph, for any $\epsilon > 0$ is not possible unless \textbf{P = NP} \cite{hastad1996clique}. Similar inapproximability results have been established for the MIS problem \cite{berman1992complexity}. As such, heuristics without approximation guarantees have been adopted for practical purposes for these problems.

The ReduMIS method \cite{lamm2016finding} is the state-of-the-art solver for the MIS problem. It consists of two main components: an iterative implementation of a series of graph reduction techniques, 
followed by the use of an evolutionary algorithm. The latter starts with a pool of independent sets, then evolves the pool 
over several rounds. In each round, the algorithm uses a selection procedure to select favorable nodes. This is achieved by executing graph partitioning that clusters the graph nodes into disjoint clusters and separators for solution improvement. Our method, however, does not include this expensive solution combination operation. \textcolor{black}{In contrast, our use of community detection is altogether different as it takes place prior to obtaining the initial solution(s) for the purpose of scaling up to large graphs and not for solution improvement. Moreover, we do not enforce the partitions and separators to be disjoint (i.e., sharing no edges)} (See Section \ref{sec:CD}).

Learning-based approaches which make use of RL algorithms and ML architectures have been recently introduced to solve \textbf{NP-hard} problems. RL-based methods train a deep Q-Network (DQN) such that the obtained policy operates as a meta-algorithm that incrementally yields a solution \cite{bengio2021machine}. 
The recent state-of-the-art work of \cite{dai2017learning} combines a DQN with graph embeddings, 
allowing discrimination between vertices based on their impact on the solution, and enabling scalability to larger problem instances.
By contrast, our proposed method does not require training of a DQN. The supervised learning method in \cite{li2018combinatorial} achieves state-of-the-art performance for the MIS problem. It integrates several graph reductions \cite{lamm2016finding}, Graph Convolutional Networks (GCN) \cite{defferrard2016convolutional}, guided tree search, and a solution improvement local search algorithm \cite{andrade2012fast}. 
The GCN is trained using benchmark graphs 
and their solutions as the true labels 
to learn probability maps 
for each vertex being in the optimal solution. The point of resemblance to our approach is the use of 
an NN 
to derive solutions to combinatorial optimization problems. However, a major difference is that our approach does not rely on supervised learning; it uses an entirely different dNN and obtains a solution via dataless training. More specifically, the means by which we optimize the dNN consists of applying backpropagation \cite{riedmiller1993direct} to a loss function defined entirely in terms of the given graph and the structure of the dNN without the need for a dataset as is standard in training deep learning models.



\section{Preliminaries}

An undirected graph is denoted by $G=(V,E)$, where $V$ is its vertex set and $E\subseteq V \times V$ is its edge set. The number of nodes is $|V| = n$ and the number of edges is $|E| = m$. 
We also use the notation $V(H)$ and $E(H)$ to refer to the vertex and edge sets of some graph $H$, respectively. 
The degree of a node $v\in V$ is denoted by $\textrm{d}(v)$, and the maximum degree of the graph by $\Delta(G)$. The neighborhood of node $v\in V$ is $N(v) = \{u\in V \mid (u,v)\in E\}$. 
For a subset of nodes $U\subseteq V$, $G[U] = (U, E[U])$ is used to represent the subgraph induced by $U$, i.e., the graph on $U$ whose edge set $E[U] = \{(u, v) \in E \mid u, v \in U\}$ consists of all edges of $G$ with both ends in $U$. The complement of graph $G$ is the graph $G'=(V,E')$ on $V$, where $E' = V \times V \setminus E$, 
i.e., $E'$ consists of all the edges between nodes that are not adjacent in $G$, with $|E'| = m'$. Hence, $m+m' = n(n-1)/2$ is the number of edges in the complete graph on $V$. For any positive integer $n$, $[n]:=\{1, \ldots, n\}$. We use $|\cdot|$ to denote the cardinality of a set, unless stated otherwise. 





\textcolor{black}{We consider the \textbf{NP-hard} problem of finding maximum independent sets (MIS). We define the MIS problem and the related maximum clique (MC) and minimum vertex cover (MVC) problems, then briefly describe how MC and MVC can be represented as instances of MIS.}


\begin{definition}[MIS Problem]
Given an undirected graph $G = (V, E)$, MIS is the problem of finding a subset of vertices $\mathcal{I} \subseteq V$ such that $E(G[\mathcal{I}]) = \emptyset$, and $|\mathcal{I}|$ is maximized.
\end{definition}


\begin{definition}[MC Problem]
Given an undirected graph $G = (V, E)$, MC is the problem of finding a subset of vertices $C \subseteq V$ such that $G[C]$ is a complete graph, and $|C|$ is maximized.
\end{definition}

\begin{definition}[MVC Problem]
Given an undirected graph $G = (V, E)$, MVC is the problem of finding a subset of vertices $R \subseteq V$ such that, for every $(u,v)\in E$, either $u\in R$ or $v\in R$, and $|R|$ is minimized.
\end{definition}

For the MC problem, the MIS of a graph is an MC of the complement graph \cite{karp1972reducibility}. MVC and MIS are complementary, i.e, a vertex set is independent if and only if its complement is a vertex cover \cite{cook1995combinatorial}. We exploit these properties in the development of our dNNs.


\section{Methodology}
\begin{figure}[t]
\centering
\includegraphics[width=9cm]{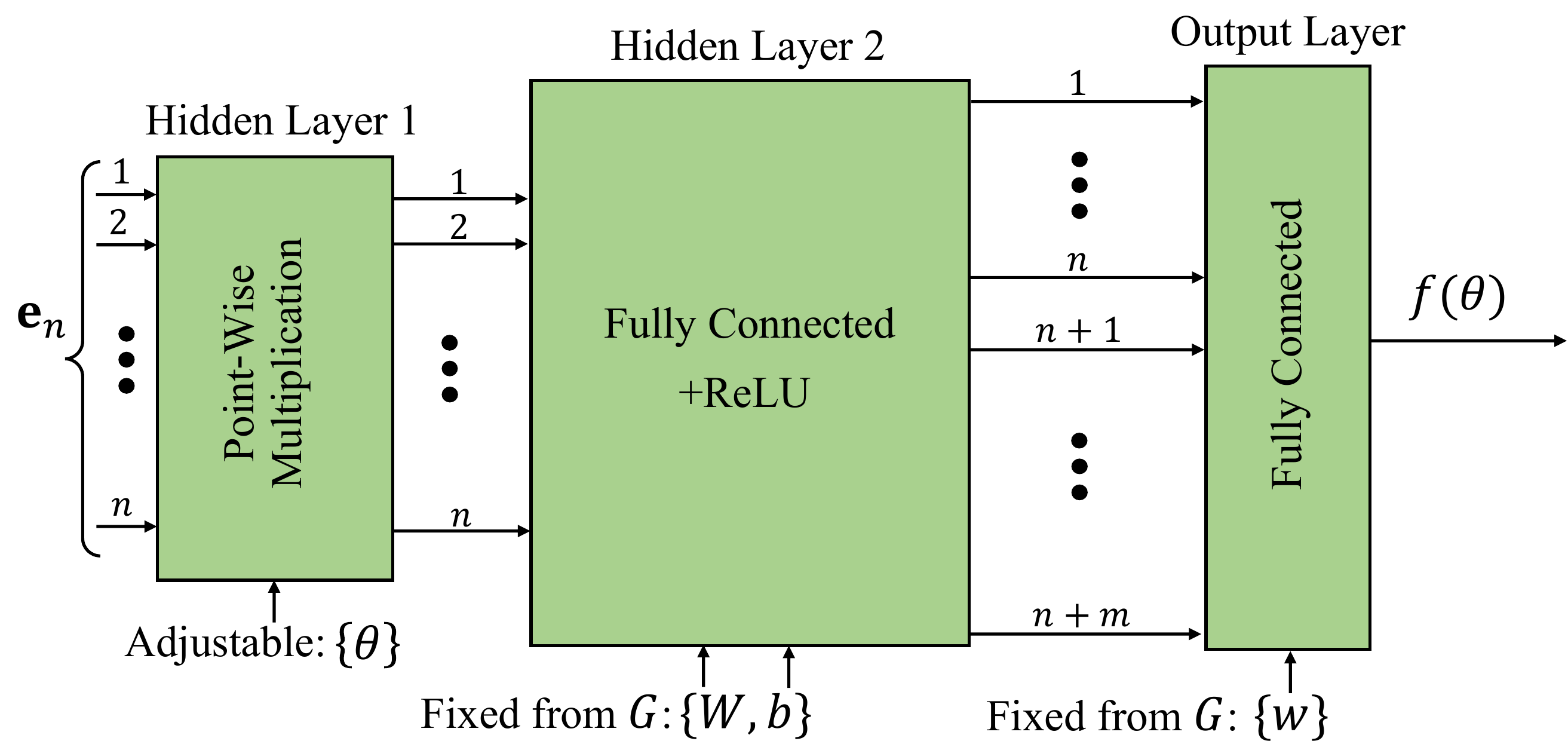}
\caption{{Block diagram of the proposed dNN, $f(\theta)$.}}
\label{fig: BD}
\end{figure}
\begin{figure*}[t]
\centering
\includegraphics[width=12cm]{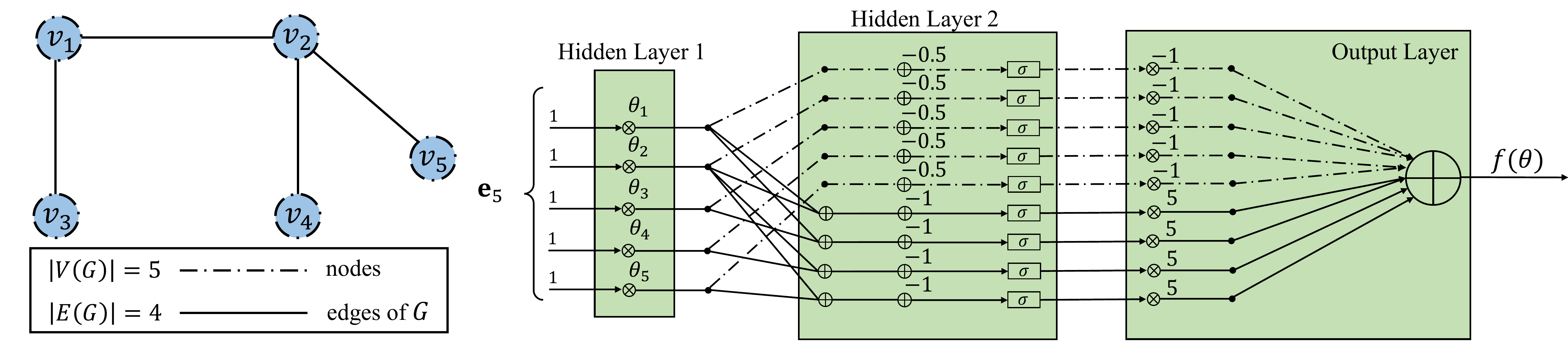}
\caption{{An example of graph $G=(V=\{v_1,v_2,v_3,v_4,v_5\},E=\{(v_1,v_2),(v_1,v_3),(v_2,v_4),(v_2,v_5)\})$ and its dNN construction $f$ for the MIS problem.}}
\label{fig: Graph to NN construction no Clique.}
\end{figure*}
In this section, we describe the different components of our proposed approach. 
To preface the discussion and distinguish our dataless solution from learning-based methods, consider first the conventional supervised learning setting. In this setting, there is generally some set of data $D = \{(x_i, y_i)\}_i$ consisting of input tensors $x_i \in \mathbb{R}^n$ and their associated value, or label, $y_i \in Y$. The goal of learning is then to train a learning model $f : \mathbb{R}^n \rightarrow Y$ parameterized by $\theta$ so that $f$ learns to predict the input-output relationship of the underlying data distribution $D'$. This is given by the objective below, in which $\mathcal{L}$ denotes the loss function. 
\begin{equation}
\label{eqn: 1}
\min_\theta \mathbb{E}_{(x, y) \sim D'} [\mathcal{L}(f(x; \theta), y)]\:
\end{equation}
Since a dataset $D$ is used in lieu of the true underlying data distribution $D'$, the objective function becomes
\begin{equation}
\label{eqn: 2}
\min_\theta \frac{1}{|D|} \sum_{(x, y) \in D} \mathcal{L}(f(x; \theta), y)\:
\end{equation}
The loss function is chosen to be a differentiable function, such as the minimum square error, in order to leverage optimization using backpropagation. In our approach, we leverage dNNs, which we define as neural networks whose loss function $\mathcal{L}$ does not depend on data. In this sense, what we present is a learning technique different from supervised, unsupervised, and reinforcement learning.



\subsection{Dataless Neural Network Construction}

Given a graph $G = (V, E)$, we construct a dNN $f$ with trainable parameters $\theta\in [0,1]^n$ whose single output is $f(\textbf{e}_{n}; \theta) = f(\theta) \in \mathbb{R}$. Note that the input to $f$ is the all-ones vector $\textbf{e}_{n}$ and thus does not depend on any data. The network consists of an input layer, two hidden layers, and an output layer. The trainable parameters $\theta \in [0, 1]^n$ connect the input layer $\textbf{e}_{n}$ to the first hidden layer through an elementwise product. All other parameters are fixed during training and are presented next. The connectivity structure from the first hidden layer to the second is given by the binary matrix $W\in \{0,1\}^{n\times(n+m)}$ and will depend on $G$, the bias vector at the second hidden layer is given by $b\in \{-1,-1/2\}^{n+m}$, and the fully-connected weight matrix from the second hidden layer to the output layer is given by $w \in \{-1,n\}^{n+m}$. These parameters are defined as a function of $G$. The output of $f$ is given by (\ref{equation:networkOutput}), where $\odot$ is the element-wise Hadamard product that represents the operation of the first hidden layer of the constructed network. The second hidden layer is a fully-connected layer with fixed matrix $W$ and bias vector $b$ with a ReLU activation function $\sigma(x) = \max (0, x)$, while the last layer is another fully-connected layer expressed in vector $w$. 
See Figure \ref{fig: BD} for a block diagram of the generalized proposed network. 
\begin{equation} 
f(\textbf{e}_n\:; \theta) = f(\theta) = w^T \sigma(W^T(\textbf{e}_{n} \odot \theta)+b)\:
\label{equation:networkOutput}
\end{equation}
In the sequel, we prove that $f(\theta)$ is an equivalent differentiable representation of the MIS problem $G = (V, E)$ in that $f$ achieves its minimum value when an MIS $U \subseteq V$ is found in $G$. Furthermore, this is a constructive representation since $U$ can be obtained from $\theta$ as follows. Let $\theta^* = \argmin_{\theta\in [0,1]^n} f(\theta)$ denote an optimal solution to $f$ and let $\mathcal{I} : [0, 1]^n \rightarrow 2^V$ denote the corresponding independent set found by $\theta$ such that
\begin{equation} \label{eqn: MIS w.r.t theta}
\mathcal{I} (\theta) = \{v \in V \mid \theta^*_v \geq \alpha \}\:,
\end{equation}
for any $\alpha > 0$. We show that $|\mathcal{I}(\theta^*)| = |U|$. Intuitively, this is tantamount to choosing the trained parameter indices in $\theta$ whose values exceed some threshold and choosing the vertices in $V$ corresponding to those indices to be the MIS.
The fixed parameters of $f$ are constructed from the given graph $G = (V, E)$ as follows. The first $n\times n$ submatrix of $W$ represents the nodes $V$ in the graph and its weights are set equal to the identity matrix $I_n$. The following $m$ columns of $W$ correspond to the edges $E$ in the graph. In particular, for the column associated with a given edge, a value of $1$ is assigned to the entries corresponding to both ends of that edge and $0$ otherwise. 
For the bias vector $b$, we assign a value of $-1/2$ to the entries corresponding to the first $n$ nodes, and a value of $-1$ to the $m$ entries corresponding to the edges. Finally, these nodes are input to their corresponding ReLU activation functions. For the vector $w$ connecting the second hidden layer to the output layer, values of $-1$ and $n$ are assigned to entries corresponding to nodes and edges in the second hidden layer, respectively. Hence, the parameters $W$, $b$, and $w$ are defined as
\begin{equation}
\begin{gathered} \label{eqn: W nodes and edges}
W(i,i) = 1, ~v_i\in V,~ i\in [n]\:, \\
W(i,n+l) = W(j,n+l) = 1\:,
\forall e_l = (v_i,v_j) \in E, l\in [m]\:,
\end{gathered} 
\end{equation}
\begin{equation}
\begin{gathered} \label{eqn: b and nodes and edges}
b(i) = -1/2,~w(i) = -1, ~v_i\in V,~ i\in [n]\:, \\
b(n+l) = -1,~w(n+l) = n,~l\in [m]\:.
\end{gathered} 
\end{equation}
Therefore, we can rewrite (\ref{equation:networkOutput}) as follows 
\begin{equation} \label{eqn: NN h w.r.t. only w 1}
\begin{gathered}
f(\theta) = -\sum_{v\in V} \sigma(\theta_v-1/2) + 
n\sum_{(u,v)\in E}\sigma(\theta_{u}+\theta_{v}-1)\:.
\end{gathered}
\end{equation}
Figure \ref{fig: Graph to NN construction no Clique.} presents an example. The following theorem establishes a relation between the minimum value of \eqref{eqn: NN h w.r.t. only w 1} and the size of the MIS.
\begin{theorem} \label{th: min h}
Given a graph $G = (V, E)$ and its corresponding dNN $f$, an MIS $U \subseteq V$ of $G$ is of size $|U| = k$ if and only if the minimum value of $f$ is $-k/2$.
\end{theorem}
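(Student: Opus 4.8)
The plan is to establish the stronger statement that $\min_{\theta\in[0,1]^n} f(\theta)=-k/2$, where $k=|U|$ for a maximum independent set $U$ of $G$; the biconditional then follows immediately, since if the minimum of $f$ also equalled $-k'/2$ for the MIS size $k'$ we would get $k=k'$. I would prove this equality by two matching inequalities. For $\min f\le -k/2$ I exhibit a witness: take an MIS $U$ with $|U|=k$ and set $\theta_v=1$ for $v\in U$ and $\theta_v=0$ otherwise. In \eqref{eqn: NN h w.r.t. only w 1} each node term then contributes $-\sigma(1/2)=-1/2$ exactly for the $k$ vertices of $U$, while every edge $(u,v)$ has at most one endpoint in $U$ by independence, so $\theta_u+\theta_v\le 1$ and every edge term $n\,\sigma(\theta_u+\theta_v-1)$ vanishes; hence $f(\theta)=-k/2$.

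The real content is the reverse bound $f(\theta)\ge -k/2$ for all $\theta\in[0,1]^n$. On the Boolean cube this is immediate: if $\theta\in\{0,1\}^n$ has support $S$, then \eqref{eqn: NN h w.r.t. only w 1} evaluates to $f(\theta)=-|S|/2+n\,|E(G[S])|$, which equals $-|S|/2\ge -k/2$ when $S$ is independent and is at least $-n/2+n>0>-k/2$ otherwise. So it suffices to show the minimum of $f$ over the whole cube is attained at a Boolean vertex, and for this I would use a two-phase coordinate argument. Fixing all coordinates but $\theta_v$, write $g_v(t)=f(\theta|_{\theta_v=t})$. On $[0,1/2]$ the node term $-\sigma(t-1/2)$ is identically zero and each edge term $n\,\sigma(t+\theta_u-1)$ is non-decreasing in $t$, so $g_v$ is non-decreasing there; therefore, starting from any global minimizer one can repeatedly push a coordinate lying in $(0,1/2]$ down to $0$ without increasing $f$, reaching a global minimizer $\theta^{\ast}$ all of whose coordinates lie in $\{0\}\cup(1/2,1)\cup\{1\}$. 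Suppose now $\theta^{\ast}_v\in(1/2,1)$ for some $v$. Since no coordinate lies in $(0,1/2)$, none of the edge terms $\sigma(t+\theta^{\ast}_u-1)$ has a breakpoint in $(1/2,1)$, so on $[1/2,1]$ the function $g_v$ is affine with slope $-1+n\cdot|\{u\in N(v):\theta^{\ast}_u>1/2\}|$. For $n\ge 2$ this number has the form $-1+n\cdot(\text{nonnegative integer})$ and is hence nonzero, so an affine function with that slope attains its minimum over $[1/2,1]$ at an endpoint, contradicting that $\theta^{\ast}_v\in(1/2,1)$ minimizes $g_v$. Thus $\theta^{\ast}$ is Boolean, and combining this with the Boolean-cube evaluation and the upper bound yields $\min f=-k/2$; as a byproduct the support of $\theta^{\ast}$ is itself an MIS, consistent with the constructive remark preceding the theorem.

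I expect the single delicate point to be exactly this last step — that the optimum is attained at a Boolean vector. The function $f$ is neither convex nor concave (it is a concave node part plus a convex edge part), so extreme-point optimality cannot be invoked directly; the argument works only because of the very particular breakpoint locations ($1/2$ and the values $1-\theta_u$) together with the integer multiplier $n$ on the edge penalties, which is what makes the slope $-1+n\mathbb{Z}_{\ge 0}$ miss $0$. Minor care is needed with ties (several coordinates equal, or a pair with $\theta_u+\theta_v=1$) while pushing coordinates to $0$, and the degenerate cases $n\le 1$ or $E=\emptyset$ should be dispatched separately, but these are routine.
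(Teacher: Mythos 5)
Your proof is correct, and it takes a genuinely different --- and more rigorous --- route than the paper's. The paper argues both directions by local perturbation: for ($\Rightarrow$) it exhibits the indicator vector of $U$ as achieving $-k/2$ and then derives a contradiction by considering the effect of ``adding'' mass on a single vertex outside $U$, implicitly assuming a competitor $\theta'$ already assigns value $1$ to all of $U$; for ($\Leftarrow$) it similarly reasons about individual edges and vertices taking values in $\{0,1\}$. This leaves a real gap: nothing in the paper's argument rules out a fractional $\theta\in[0,1]^n$ achieving a value below $-k/2$, which is precisely the delicate point you identify. You instead prove the single stronger claim $\min_{\theta\in[0,1]^n}f(\theta)=-k/2$ by two inequalities, and the substantive half --- the lower bound over the whole cube --- is handled by your two-phase coordinate argument: push coordinates in $(0,1/2]$ to $0$ using monotonicity of the edge penalties, then observe that on $[1/2,1]$ each coordinate section is affine with slope $-1+n\cdot(\text{nonnegative integer})\neq 0$ for $n\ge 2$, so no global minimizer can have a coordinate in $(1/2,1)$; the minimum is therefore attained at a Boolean point, where $f(\theta)=-|S|/2+n\,|E(G[S])|$ makes the conclusion transparent. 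What your approach buys is a complete treatment of the continuous relaxation (and, as a byproduct, the fact that the support of any Boolean minimizer is itself an MIS, which the paper only asserts informally after the theorem); what the paper's buys is brevity and an intuition tied to the network diagram. Your flagged caveats (ties when pushing coordinates, $n\le 1$, $E=\emptyset$) are indeed routine and do not affect correctness.
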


\begin{proof}
($\implies$) Assume that $|U| = k$ and let $\theta_{v_i} = 1$ for each $v_i\in U$ and $\theta_{v_i} = 0$ otherwise. For an arbitrary pair of nodes $v_i, v_j \in U$, consider the output of $f$ as visualized by Figure~\ref{fig:proof figure}, where edge values denote the outputs of the preceding nodes in the network and nodes $\eta^1_i, \eta^2_i$ denote the $i^\text{th}$ neurons in the first and second hidden layers, respectively. We abuse notation to refer to both the output neuron and the output value as $f(\theta)$.

\begin{figure}
    \centering
    \includegraphics[width=10cm]{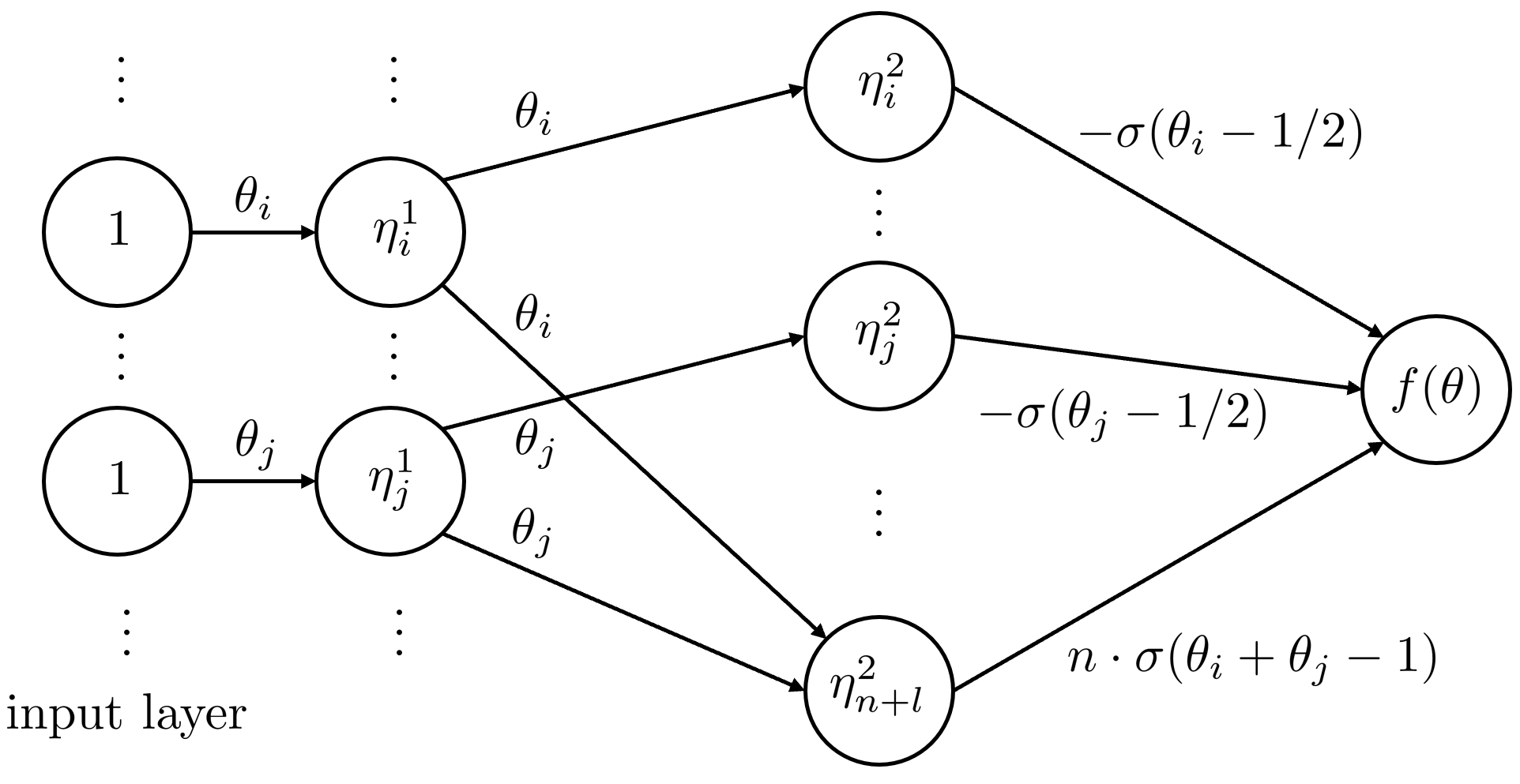}
    \caption{Network $f$.}
    \label{fig:proof figure}
\end{figure}

Note that $v_i$ and $v_j$ each contribute an output of $-1/2$ to $f(\theta)$. This follows from the fact that, by definition of MIS, these vertices do not share an edge and so the output of $\eta^2_{n + l}$ is $0$. Thus, for an MIS of size $|U| = k$, we have $f(\theta) = -k/2$. This is the minimum value attainable by $f$. Indeed, consider, for the sake of contradiction, that there exists $\theta'$ such that $f(\theta') < f(\theta)$. As with $\theta$, this $\theta'$ must be defined such that $\theta_{v_i} = 1$ for each $v_i\in U$. Consider the addition of some other $v_{k'} \notin U$. Then $\eta^2_{k'}$ will contribute $- \sigma(\theta_{k'} - 1/2)$ to $f(\theta')$ and $\eta^2_{n + l}$ will contribute at least $n \theta_{k'}$ for every edge $e_k = (v_{k'}, v) \in E$, where $v \in U$. By definition of MIS, some such edge must exist. Therefore, we would have $f(\theta') > f(\theta)$, yielding a contradiction.



($\impliedby$) Assume that the minimum value of $f$ is $f(\theta) = -k/2$. Consider an arbitrary edge $e_l = (v_i,v_j)\in E$. It follows from the construction of $f$ that $\theta_{v_i}+\theta_{v_j}\leq 1$ must hold for $f$ to achieve its minimum value. For the sake of contradiction, consider that $\theta_i + \theta_j > 1$. In this case, neuron $\eta^2_{n + l}$ contributes $n (\theta_i + \theta_j - 1) > 0$ to the output $f(\theta)$. 
This yields a contradiction since we can simply choose $\theta_i$ and $\theta_j$ to be $0$, thereby contributing a value of $0$ to $f(\theta)$. Given an arbitrary vertex $v$ and its neighbors $N(v)$, it must be the case that $\theta_i = 1$ for some $v_i \in N(v)$ as this would contribute a value of $-1/2$ to $f(\theta)$ through node $\eta^2_i$ and a value of $0$ through nodes $\eta^2_{n + l}$ for all vertices $u \in N(v)$ connected to $v$ through edge $e_l$. It follows that there must be $k$ entries in $\theta$ with value $1$, each contributing a value of $-1/2$ to the output $f(\theta)$ such that none of them share a neuron in the second hidden layer. These correspond to $k$ vertices in $V$ that do not share edges. Therefore, $|U| = k$.
\end{proof}

From Theorem~1, it follows that the minimum value of $f$ is achieved when the maximum number of entries in $\theta$ have value 1 such that their corresponding nodes in $G$ share no edges. This yields an independent set $\mathcal{I}(\theta) = \{v_i \in V \mid \theta_{v_i} = 1\}$ of maximum cardinality.\par

Due to the non-linearity introduced by the ReLUs in the dNN $f$, we obtain a minimizer for $f$ by leveraging the backpropagation algorithm along with the well-known ADAM optimizer \cite{kingma2014adam}. 
Hence, we iteratively minimize the loss function $\mathcal{L}(f(\theta),f_d) = |f(\theta)-f_d|^2 \:,$
where $|\cdot|$ denotes the absolute value and $f_d$ is the minimum desired value used for parameter tuning. \textcolor{black}{Per Theorem~\ref{th: min h}, the minimum achievable value of $f(\theta)$ is a function of the size of the MIS. Therefore, during training we select $f_d = -n/2$, a value that is only attained by $f(\theta)$ if $G$ is a null graph}. 






\subsection{On the Duality of MIS and MC}

Since the graph induced by the MIS is a null graph on $G$ and fully-connected on its complement $G'$, we propose to include the edges of $G'$ in the construction of $f$ to enhance the tuning of the parameters $\theta$. We term the resulting enhanced dataless neural network as $h$ with output value $h(\theta)$. 
In this case, we extend the definition of the fixed parameters 
$W\in \{0,1\}^{n\times(n+m+m')}$, $b\in \{-1,-1/2\}^{n+m+m'}$, and $w \in \{-1,n\}^{n+m+m'}$, 
by defining the mapping for the augmented portion of these parameters representing the $m'$ edges of $G'$ as

%
\begin{equation}
\begin{gathered} \label{eqn: W edge of G hat}
W(i,n+m+l) = W(j,n+m+l) = 1,\\
\forall e_l = (v_i,v_j) \in E(G'), l\in [m']\:,
\end{gathered} 
\end{equation}
\begin{equation}
\begin{gathered} \label{eqn: b and w edges of G hat}
b(n+m+l) = w(n+m+l) =  -1,~l\in [m']\:.
\end{gathered} 
\end{equation}
%
Given \eqref{eqn: NN h w.r.t. only w 1}, \eqref{eqn: W edge of G hat}, and \eqref{eqn: b and w edges of G hat}, the output of $h$ is
\begin{equation} \label{eqn: NN h w.r.t. only w}
\begin{gathered}
h(\theta) = f(\theta)-\sum_{(u,v)\in E(G')}\sigma(\theta_{u}+\theta_{v}-1)\:.
\end{gathered}
\end{equation}
Figure~\ref{fig: Graph to NN construction.} presents an example of the proposed construction from a simple 5-node graph $G$ to its corresponding dNN $h$. Our next result is analogous to Theorem \ref{th: min h} and establishes a relation between the minimum value of \eqref{eqn: NN h w.r.t. only w} and the size of an MIS in $G$. 

\begin{theorem} \label{th: min h hat}
Given a graph $G = (V, E)$ and its corresponding enhanced dNN $h$, an MIS $U \subseteq V$ of $G$ is of size $|U| = k$ if and only if the minimum value of $h$ is $-k^2/2$.
\end{theorem}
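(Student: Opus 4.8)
The plan is to prove the equivalence by identifying the global minimum of $h$ over $[0,1]^n$. Concretely, I would show that $\min_{\theta\in[0,1]^n} h(\theta) = -k^2/2$, where $k$ denotes the independence number of $G$ (the common size of its maximum independent sets), and that this value is attained exactly at the threshold indicator vectors of maximum independent sets; both directions of the theorem, as well as the extraction of $U$ via \eqref{eqn: MIS w.r.t theta}, then follow at once. The natural vehicle is \eqref{eqn: NN h w.r.t. only w}, which exhibits $h$ as $f$ plus the correction term $-\sum_{(u,v)\in E(G')}\sigma(\theta_u+\theta_v-1)$ contributed by the complement edges, so the analysis refines that of Theorem~\ref{th: min h}.

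For the upper bound I would exhibit a minimizer directly. Let $U$ be an MIS with $|U|=k$ and put $\theta_v = 1$ for $v\in U$ and $\theta_v=0$ otherwise. Since $U$ is independent in $G$, every term $\sigma(\theta_u+\theta_v-1)$ with $(u,v)\in E$ vanishes, so the $f$-part contributes $-k/2$ exactly as in Theorem~\ref{th: min h}. Since $U$ is independent in $G$ it is a clique in $G'$, so each of the $\binom{k}{2}$ pairs of vertices of $U$ forms an edge of $G'$ and contributes $-\sigma(1+1-1) = -1$, while every other complement edge contributes $0$. Hence $h(\theta) = -k/2 - \binom{k}{2} = -k^2/2$, proving $\min_\theta h(\theta) \le -k^2/2$.

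For the lower bound I would first handle integral $\theta$. Writing $\theta = \mathbf{1}_S$ for $S\subseteq V$, one gets $h(\mathbf{1}_S) = -|S|/2 + n\,|E(G[S])| - |E(G'[S])|$, and since $|E(G[S])| + |E(G'[S])| = \binom{|S|}{2}$ this collapses to $h(\mathbf{1}_S) = -|S|^2/2 + (n+1)\,|E(G[S])|$. I would lower-bound $|E(G[S])|$ by taking a maximal independent set $I$ of $G[S]$: then $|I|\le k$ and, by maximality, every vertex of $S\setminus I$ has a neighbor in $I$, so $|E(G[S])|\ge |S\setminus I|\ge |S|-k$. Combined with $|S|\le n$, a one-line arithmetic check (separating the cases $|S|\le k$ and $|S|>k$, using $(n+1) > (|S|+k)/2$) gives $h(\mathbf{1}_S)\ge -k^2/2$, with equality exactly when $S$ is an MIS. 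It then remains to show that this integral minimum is in fact the minimum over all of $[0,1]^n$, and I expect this to be the main obstacle. The convenient route — round a fractional $\theta$ at a uniformly random threshold and use $\mathbb{E}[h(\mathbf{1}_{S_t})]\le h(\theta)$ to reduce to the integral bound — does not go through, since the $G$-edge terms are convex and can only increase under rounding. Instead I would analyze a minimizer $\theta^\star$ directly, using that the weight $n$ on $G$-edges dominates the node bias $-\tfrac12$ and every complement-edge penalty; a coordinate-wise perturbation argument, applied in a careful order so that the (convex) $G$-edge terms are never activated, should drive each fractional coordinate of $\theta^\star$ to an endpoint of $[0,1]$ without increasing $h$, thereby reducing to the integral case. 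Making this rigorous in the presence of the ReLU kinks — the equalities $\theta_v=\tfrac12$ and $\theta_u+\theta_v=1$ at which one-sided derivatives must be compared — is the delicate part of the proof.

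Putting the two bounds together gives $\min_\theta h(\theta) = -k^2/2$ with $k$ the independence number of $G$, attained only at indicators of maximum independent sets. Consequently, if some MIS has size $k$ then $\min_\theta h(\theta)=-k^2/2$; and conversely, if $\min_\theta h(\theta)=-k^2/2$, then $k$ must equal the independence number, i.e. $|U|=k$, which is the assertion.
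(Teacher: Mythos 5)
Your forward computation (the indicator of an MIS gives $h=-k/2-\binom{k}{2}=-k^2/2$) is exactly the paper's entire proof: the paper verifies the value at that single point, invokes Theorem~\ref{th: min h} for the first term, and stops. Your integral lower bound is a genuine addition that the paper does not supply. The identity $h(\mathbf{1}_S)=-|S|^2/2+(n+1)\,|E(G[S])|$, combined with $|E(G[S])|\ge |S|-k$ obtained from a maximal independent set of $G[S]$, correctly shows that over $\{0,1\}^n$ the minimum is $-k^2/2$ and is attained only at maximum independent sets; the arithmetic in both cases checks out.

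However, the step you flag as ``delicate'' --- passing from $\{0,1\}^n$ to $[0,1]^n$ --- is not merely delicate: it is false, so no perturbation or rounding argument can close it. Let $G$ on $n\ge 5$ vertices consist of a clique on $u_1,\dots,u_{n-1}$ plus an isolated vertex $v$, so that $k=2$ and $-k^2/2=-2$. Set $\theta_v=1$ and $\theta_{u_i}=1/2$ for all $i$. Every $G$-edge satisfies $\theta_{u_i}+\theta_{u_j}-1=0$, so the penalty term vanishes; the node terms contribute only $-\sigma(1-1/2)=-1/2$; and each of the $n-1$ complement edges $(v,u_i)$ contributes $-\sigma(1+1/2-1)=-1/2$. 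Hence $h(\theta)=-n/2<-2$. The obstruction is structural: a complement edge rewards $\theta_u+\theta_v-1$ with coefficient $-1$, while the node reward for raising $\theta_u$ alone is capped at $-1/2$, so half-activating the many non-neighbors of a single fully activated vertex beats every integral solution whenever that vertex has more than $k^2-1$ non-neighbors. Consequently the theorem holds only if ``minimum value of $h$'' is read over binary $\theta$, where your argument is complete and strictly stronger than the paper's; over the domain $[0,1]^n$ on which the network is actually trained, the optimal value can lie strictly below $-k^2/2$. The paper's own proof is silent on minimality over $[0,1]^n$ and so shares this gap.
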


\begin{proof}
Per Theorem~\ref{th: min h}, the minimum value of the first term of \eqref{eqn: NN h w.r.t. only w} is $-k/2$. Therefore, we consider the minimum value of the remaining (second) term, which corresponds to the edges of $G'$. 
Similar to Theorem~\ref{th: min h}, assume that $\theta_v = 1$ for each $v \in U$ and $\theta_v = 0$ otherwise. The graph induced by the MIS w.r.t. $G'$ is a fully-connected graph, i.e., $|E(G'[U])| = k(k-1)/2$. Given the $-1$ bias, the outputs associated with the edges of $G'$ will be $1$. Since the subgraph induced on $G'$ is complete, we get $-k(k-1)/2$ for the second term. The combined output is thus $-(k/2)-(k(k-1)/2) = -k^2/2$, which concludes the proof. 
\end{proof}
%
Since the minimum value of $h$ is $-k^{2}/2$, we use $h_d=-n^{2}/2$ for training the dNN by minimizing the loss 
\begin{equation}
\begin{gathered} \label{eqn: loss h}
\mathcal{L}(h(\theta),h_d) = |h(\theta)-h_d|^2 \:.
\end{gathered} 
\end{equation}
\begin{figure*}[t]
\centering
\includegraphics[width=12cm]{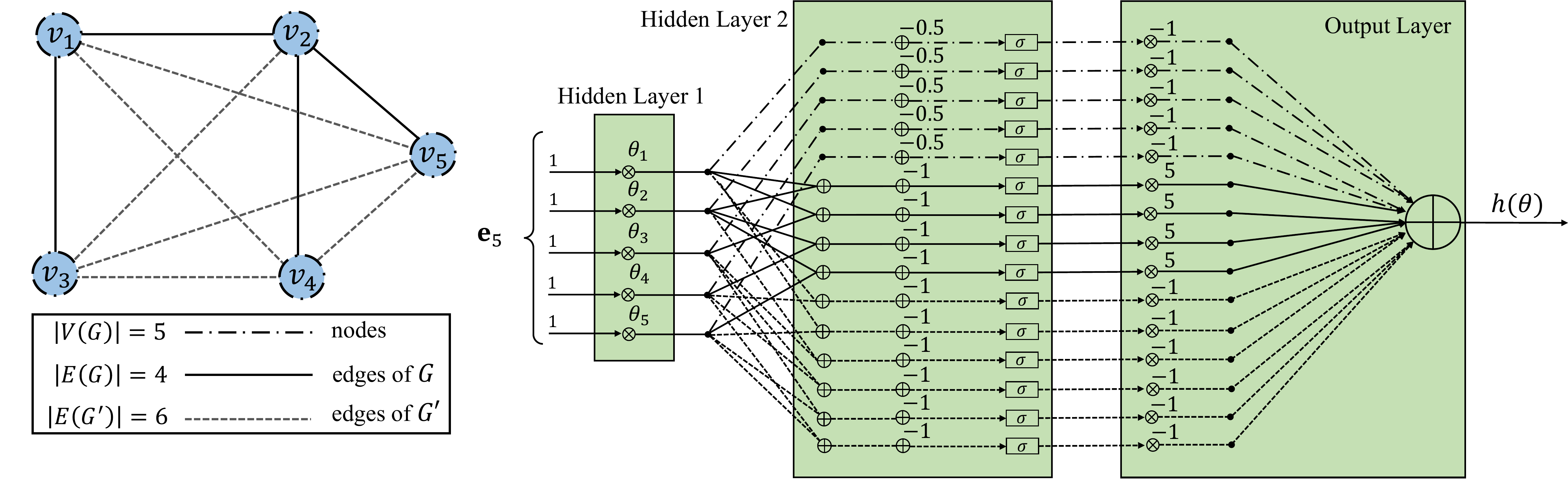}
\caption{{An example of graph $G=(V=\{v_1,v_2,v_3,v_4,v_5\},E=\{(v_1,v_2),(v_1,v_3),(v_2,v_4),(v_2,v_5)\})$ and its dNN construction $h$ for the MIS problem by leveraging the duality between MIS and MC.}}
\label{fig: Graph to NN construction.}
\end{figure*}
In general, the vertices with high degrees are less likely to be part of an MIS than vertices with low degrees. Therefore, to speed up the training of the parameters of $h$, we initialize every element of $\theta$ with a probability that is decreasing in the node degree as 
\begin{equation} \label{eqn: theta initial}
\theta_v = 1-\frac{\textrm{d}(v)}{\Delta(G)} + s\:,\:\:
\theta \leftarrow \frac{\theta}{\max_{v\in V}\theta_v}\:,
\end{equation}
where we add a small value $s$ drawn from a uniform distribution over small positive bounds ($<<0.1$) as part of the ADAM stochastic algorithm to improve performance when optimizing the loss function of $h$ \cite{goodfellow2017deep}. 

\begin{algorithm}[t]
\small
\caption{Finding MIS with dNN}
\textbf{Function}: $\mathcal{I} = \texttt{dNN}(G,\alpha)$ \\
\vspace{1mm}
\small{ 1:}  \textbf{construct} $h$ from $G$ using \eqref{eqn: W nodes and edges}, \eqref{eqn: b and nodes and edges}, \eqref{eqn: W edge of G hat}, and \eqref{eqn: b and w edges of G hat} \\
\vspace{1mm}
\small{ 2:}  \textbf{initialize} $\theta$ as in \eqref{eqn: theta initial}, $\mathcal{I} (\theta) = \emptyset$  \\
\vspace{1mm}
\small{ 3:}  \textbf{while} $\exists v\in V \setminus \mathcal{I} (\theta) \textbf{ s.t. } E(G[\mathcal{I} (\theta) \cup \{v\}]) \neq \emptyset$ \\
\vspace{1mm}
\small{ 4:} \hspace{2mm} \textbf{update}  $\theta \leftarrow \argmin_{\theta \in [0,1]^{n}} |h(\theta)-h_d|^2$\\
\vspace{1mm}
\small{ 5:} \hspace{2mm} \textbf{obtain} $\mathcal{I} (\theta) = \{v\in V \mid \theta_v \geq \alpha \}$ \\
\vspace{-3.5mm}
\label{alg: MIS OSS alg}
\end{algorithm}

\subsection{Scaling up: Community Detection Approach}
\label{sec:CD}

To handle large-scale graphs, many techniques have been introduced in the past, including Linear Programming (LP) reduction, removal of pendant vertices, and other heuristics as presented in \cite{akiba2016branch} and adopted in the latest state-of-the-art methods. However, these techniques are only applicable on sparse graphs as pointed out in \cite{lamm2016finding}. 

This motivates our work here on developing a reduction technique that is independent of the graph type and density. To this end, we perform community detection \cite{yang2016comparative} to partition the graph into communities, which are groups of nodes with dense connections internally and sparser connections between groups. Then, using Algorithm~\ref{alg: MIS OSS alg}, we obtain a MIS for the subgraph induced by each community separately. 
Subsequently, a MIS is obtained for the full graph by processing the identified sets. More specifically, let $C_i, i\in [r]$ denote the set of nodes in community $i$, where $r$ is the total number of communities found by a community detection algorithm. The inter-cluster edge set 
\begin{equation} \label{eqn: inter-cluster edges}
R = \{(u,v)\in E   \mid  u \in C_i , v\in C_j, i\neq j \}\:,
\end{equation}
is the set of edges between nodes in different communities. For every $C_i$, we construct a dNN $h_i$ and obtain an MIS $\mathcal{I}_i = \texttt{dNN}(G_i, \alpha)$, where $G_i = G[C_i]$. The union of these sets is the set $B = \bigcup_{i \in [r]} \mathcal{I}_i$. Note that $B$ is generally not an IS w.r.t. graph $G$ since there could exist edges between nodes in the solution sets of two different communities. We call these the forbidden edges and define the set
\begin{equation} \label{eqn: forbidden edges}
F = \{(u,v)\in R \mid u\in \mathcal{I}_i, v\in \mathcal{I}_j, i\neq j\}\:.
\end{equation}
In order to obtain a MIS w.r.t. $G$, we need to handle all nodes with edges in the set $F$. To this end, we develop the following procedure which processes every pair in $F$ until an IS is obtained w.r.t. $G$. First, we select a pair $(u,v)\in F$, then for every node $q\in \{u,v\}$, we check if it can be replaced by a node from its neighborhood. A candidate replacement, $w\in N(q)$, must be 1-tight, that is $|B \cap N(w)| = 1$. 
If no replacements are found for either $u$ or $v$, we remove the node with the larger number of repetitions in $F$. This is repeated until the set $F$ is empty. The entire procedure is presented in Algorithm~\ref{alg: Handlling}. In the case that the resulting set $\mathcal{I}$ is only an IS w.r.t. $G$, we obtain a MIS by executing Algorithm~\ref{alg: MIS OSS alg} on the subgraph induced by nodes that are neither in the solution nor in its neighborhood. 
In particular, $\mathcal{I}$ is updated as
\begin{equation} \label{eqn: IS to MIS}
\mathcal{I} \leftarrow \mathcal{I} \cup \texttt{dNN}(G[V\setminus (\mathcal{I} \cup N(\mathcal{I}))],\alpha)\:.
\end{equation}
\begin{algorithm}[t]
\caption{Handling Forbidden edges.}
\textbf{Input}: Graph $G=(V,E)$, $B$, $F$ \\
\textbf{Output}: IS $\mathcal{I}$ on $G$ \\
\vspace{1mm}
\small{ 1:} \textbf{initialize} $\mathcal{I}=B$\\
\vspace{1mm}
\small{ 2:} \textbf{while} $F \neq \emptyset$\\
\vspace{1mm}
\small{ 3:} \hspace{2mm} \textbf{select} a pair $(u,v)\in F$, \textbf{initialize} $\text{ReplacementFlag} = 0$ \\ 
\vspace{1mm}
\small{ 4:} \hspace{2mm} \textbf{for all} $q\in \{u,v\}$ \\
\vspace{1mm}
\small{ 5:} \hspace{4mm} \textbf{if} $\exists w\in N(q) \textbf{ s.t. } |\mathcal{I}\cap N(w)| = 1$ \\
\vspace{1mm}
\small{ 6:} \hspace{6mm} \textbf{replace} $q$ by $w$, that is $\mathcal{I}\leftarrow \mathcal{I}\setminus \{q\}$,  $\mathcal{I}\leftarrow \mathcal{I}\cup \{w\}$\\
\vspace{1mm}
\small{ 7:} \hspace{6mm} \textbf{update} $F$, $\text{ReplacementFlag} = 1$  \\
\vspace{1mm}
\small{ 8:} \hspace{6mm} \textbf{break for}  \\
\vspace{1mm}
\small{ 9:} \hspace{2mm} \textbf{if} $\text{ReplacementFlag} = 0$ (no replacement is found) \\
\vspace{1mm}
\small{10:} \hspace{4mm} \textbf{remove}  either $u$ or $v$ depending on their repetitions in $F$\\
\vspace{1mm}
\small{11:} \hspace{4mm} \textbf{update} $\mathcal{I}$ and $F$ \\
\vspace{-3.5mm}
\label{alg: Handlling}

\end{algorithm}
%



\textcolor{black}{\subsection{Solution Improvement by dNNs}}


After applying the community detection algorithm and using Algorithm~\ref{alg: MIS OSS alg} for every resulting subgraph, Algorithm~\ref{alg: Handlling} along with \eqref{eqn: IS to MIS} are used to obtain MIS $\mathcal{I}$. Since high-degree nodes are less likely to be in a large IS, given graph $G$ and solution $\mathcal{I}$, we propose a solution improvement procedure that removes a set of low-degree nodes $\mathcal{U}\subset \mathcal{I}$, such that $|\mathcal{U}| = \lambda $, along with their neighbours $N(\mathcal{U})$ from the graph. We then apply our dNN on the reduced graph $G[V\setminus (\mathcal{U}\cup N(\mathcal{U}))]$ with different initial seed for $s$ as a form of simulated annealing \cite{van1987simulated}. This procedure is iteratively applied where we increase $\lambda$ at every iteration. The best solution is maintained until some stopping criteria is met. The procedure is given in Algorithm~\ref{alg: sol impr}.\par


While a similar criteria is used to select $\mathcal{U}$ in \cite{lamm2016finding}, their method recursively tries all reduction techniques on the reduced graph where $\lambda$ is fixed. 

\begin{algorithm}[t]
\small
\caption{Solution improvement by dNNs}
\textbf{Input}: Graph $G=(V,E)$, Solution $\mathcal{I}$, $\lambda$, $\textrm{IncreaseStep}$ \\
\textbf{Output}: $\mathcal{I}^*$ \\
\vspace{1mm}
\small{ 1:}  \textbf{initialize} $\mathcal{I}^* = \mathcal{I}$ \\
\vspace{1mm}
\small{ 2:}  \textbf{while} stopping criteria is not satisfied \\
\vspace{1mm}
\small{ 3:} \hspace{2mm} \textbf{obtain}  $\mathcal{U}\subset \mathcal{I} : |\mathcal{U}| = \lambda$,$\forall u\in \mathcal{U}, v\in \mathcal{I}\setminus \mathcal{U}, \textrm{d}(u)\leq \textrm{d}(v)$ \\
\vspace{1mm}
\small{ 4:} \hspace{2mm} \textbf{obtain}  $\mathcal{I} \leftarrow \mathcal{I} \cup \texttt{dNN}(G[V\setminus (\mathcal{U} \cup N(\mathcal{U}))],\alpha)$\\
\vspace{1mm}
\small{ 5:} \hspace{2mm} \textbf{if} $|\mathcal{I}| > |\mathcal{I}^*| $ (update the optimal if $\mathcal{I}$ is of higher cardinality)\\
\vspace{1mm}
\small{ 6:}  \hspace{4mm} \textbf{update} $\mathcal{I}^* = \mathcal{I}$  \\
\vspace{1mm}
\small{ 7:} \hspace{2mm} \textbf{if} $|\mathcal{I}| \leq |\mathcal{I}^*| $ (restart from the current optimal) \\
\vspace{1mm}
\small{ 8:}  \hspace{4mm} \textbf{update} $\mathcal{I} = \mathcal{I}^*$ \\
\vspace{1mm}
\small{ 9:} \hspace{2mm} \textbf{update} $\lambda \leftarrow \lambda + \textrm{IncreaseStep}$ \\
\vspace{1mm}
\vspace{-3.5mm}
\label{alg: sol impr}
\end{algorithm}

\section{Experimental Evaluation}




In this section, we evaluate the performance of our proposed method and present comparisons to state-of-the-art methods using synthetic graphs and real-world benchmarks.

\subsection{Setup, Benchmarks, and Baselines}

We process graphs using the NetworkX library \cite{SciPyProceedings_11} and use Tensorflow \cite{abadi2016tensorflow} to construct the dNN $h$. The initial learning rate for the ADAM optimizer is set to $0.1$. We set the probability threshold $\alpha=0.5$ and use degree-based initialization. Experiments justifying our choice are presented in the appendix. 
For community detection, we use the Louvain algorithm \cite{blondel2008fast} with a resolution factor of 1.3 for large-scale low-density graphs and 0.8 for high-density instances. For Algorithm~\ref{alg: sol impr}, we choose $\lambda=5$ and increase it by 1 in every iteration. The algorithm stops when the number of nodes in the reduced graph is below 20. The experiments are run using Python 3 and Intel(R) Core(TM) i9-9940 CPU @ 3.30GHz machine. 

For low-density graphs, we incorporate the inexpensive and non-recursive LP graph reduction presented in \cite{nemhauser1975vertex} prior to performing community detection and constructing the enhanced dNN $h$. A half-integral solution (using values $0$, $1/2$, and $1$), $x^* = \argmax \{ \sum_{v\in V} x_v \text{ s.t. } x_v \geq 0, \forall v\in V, x_v+x_u \leq 1, \forall (u,v)\in E \}$ is obtained using bipartite matching. The vertices that are members of set $T =: \{v\in V \mid x^*_v = 1\}$ must be in the MIS and can thus be removed from $G$ together with their neighbors in $N(T)$. The solution obtained from training $h$ on $G[V\setminus (T\cup N(T))]$ is joined with nodes in $T$ to obtain the MIS for $G$. Furthermore, we implement the 2-improvement basic local search algorithm \cite{andrade2012fast}. The foregoing techniques are also used in most of the state-of-the-art solvers presented in \cite{lamm2016finding, li2018combinatorial, ahn2020learning}.


As a benchmark, we use the social network graphs from the Stanford Large Network Dataset Collection given in SNAP \cite{snapnets}. In these graphs, the vertices represent people and the edges reflect their interactions. We also use the citation network graphs \cite{sen2008collective} for data collected from academic search engines. In these graphs, nodes represent documents and edges reflect their citations. Using the aforementioned benchmarks, we compare the performance of our proposed framework to multiple MIS solvers, including the GCN method \cite{li2018combinatorial}, which is an ML-based approach, 
and the RL-based method S2V-DQN \cite{dai2017learning}. 
We also report results from the state-of-the-art MIS solver ReduMIS \cite{lamm2016finding}. We use the size of the identified independent set to measure the quality of the solution for every baseline considered. Furthermore, results from solving the MIS Integer Linear Program (ILP) in \eqref{eqn: MIS ILP} using CPLEX are also reported.
\begin{equation}
\begin{gathered} \label{eqn: MIS ILP}
\max_{x} \sum_{v \in V(G)} x_v \text{  subject to} \\
x_v \in \{0,1\} \:, \forall v \in V,\:
x_v + x_u \leq 1\:, \forall (v,u) \in E \:.
\end{gathered} 
\end{equation}

The aforementioned benchmarks are considered sparse graphs. Therefore, we also test our proposed method on higher-density graphs randomly generated from the Erdos-Renyi (ER) \cite{erdos1960evolution}, Barbosi-Albert (BA) \cite{albert2002statistical}, Holme and Kim (HK) \cite{holme2002growing}, and the Stochastic Block (SBM) \cite{holland1983stochastic} models. We note that learning-based methods, such as \cite{li2018combinatorial}, are known to only be applicable to sparse graphs. Thus, we compare our performance to ReduMIS and CPLEX in these settings.

\begin{table*}[t]
\caption{{Comparison to state-of-the-art baselines using real-world benchmarks in terms of the size of the identified MIS.}}
\label{tbl: comparison}\centering
 \scalebox{0.85}{\begin{tabular}{||c c c c c c c c||} 
 \hline
 \textbf{Dataset} &  $|V|$ & $|E|$ & GCN  & ReduMIS & S2V-DQN & CPLEX & dNNs \\ [0.5ex] 
 \hline\hline

 \hline
 bitcoin-alpha &  3783 & 14124 &  \textbf{2718}  & \textbf{2718} & 2705 & \textbf{2718} & \textbf{2718} \\ [0.5ex]
 \hline
 bitcoin-otc &  5881 & 21492 &  4346  & 4346 & 4334 & 4346 & \textbf{4347}  \\ [0.5ex]
 \hline
 wiki-Vote &  7115 & 100762 &  \textbf{4866} & \textbf{4866} & 4779 & \textbf{4866} & \textbf{4866} \\ [0.5ex]
 \hline
   soc-slashdot0811 &  73399 & 497274  & \textbf{53314} & \textbf{53314} & 52719 & \textbf{53314} & \textbf{53314} \\ [0.5ex]
 \hline
   soc-slashdot0922 &  82168 & 582533 &  \textbf{56398} & \textbf{56398} & 55506 & \textbf{56398} & 56395  \\ [0.5ex]
 \hline
    soc-Epinions1 &  75579 & 405740  &  \textbf{53599} & \textbf{53599} & 53089 & \textbf{53599} & 53598  \\ [0.5ex]
 \hline\hline
     Citeseer &  3327 & 4536 &  \textbf{1867} & \textbf{1867} & 1705 & 1808 & 1866   \\ [0.5ex]
 \hline
     Cora  &  2708 & 5429 &  \textbf{1451} & \textbf{1451} & 1381 & \textbf{1451} & \textbf{1451}  \\ [0.5ex]
 \hline
      PubMed  &  19717 & 44327 &  \textbf{15912} & \textbf{15912} & 15709 & \textbf{15912} & \textbf{15912}  \\ [0.5ex]
 \hline
  \end{tabular}}
\vspace{ -0.25cm}
\end{table*}



\begin{table}[t]
\caption{{Comparison to state-of-the-art baselines using synthetic graphs in terms of the average size of the MIS.}}
\label{tbl: synthetic graphs comparison}\centering
 \scalebox{0.85}{\begin{tabular}{||c c c c c c ||} 
 \hline
 \textbf{Graph Type} &  $|V|$ & $\mathbb{E}(|E|) $ & ReduMIS & CPLEX & dNNs \\ [0.5ex] 
 \hline\hline
 ER &  100 ($p=0.1$) & 496 &   \textbf{30.5}  & \textbf{30.5} & \textbf{30.5}  \\ [0.5ex]
 \hline
 ER &  100 ($p=0.2$) & 975 &   \textbf{20}  & \textbf{20} & \textbf{20}  \\ [0.5ex]
 \hline
 ER &  200 ($p=0.1$) & 1991.5 &   \textbf{41}  & \textbf{41} & \textbf{41}  \\ [0.5ex]
 \hline
   ER &  200 ($p=0.2$) & 3983.5 &   \textbf{25.5}  & \textbf{25.5} & \textbf{25.5}  \\ [0.5ex]
     \hline\hline
 SBM &  250 ($p=0.1$) & 1857.5  &  57.5 & 60.5 & \textbf{61} \\ [0.5ex]
 \hline
     SBM &  250 ($p=0.2$) & 2431  &  47 & \textbf{51} & \textbf{51} \\ [0.5ex]
 \hline
  SBM &  350 ($p=0.1$) & 3614.5  &  64 & 66.5 & \textbf{68} \\ [0.5ex]
 \hline
     SBM &  350 ($p=0.2$) & 4826  & 52 & 53.5 & \textbf{55.5} \\ [0.5ex]
 
 \hline\hline
    BA &  100 & 2450  &  \textbf{45} & \textbf{45} & \textbf{45} \\ [0.5ex]
 \hline
     BA &  200 & 9950  &  \textbf{95} & \textbf{95} & \textbf{95} \\ [0.5ex]
 \hline\hline
 HK &  100 & 2500  &  \textbf{30} & \textbf{30} & \textbf{30} \\ [0.5ex]
 \hline
     HK &  200 & 9900  &  \textbf{60} & \textbf{60} & \textbf{60} \\ [0.5ex]
 \hline
 
  \end{tabular}}
\vspace{ -0.25cm}
\end{table}

\subsection{Results on SNAP and Citation Network Benchmarks}
In this subsection, we present the overall comparison results with the GCN, ReduMIS, and S2V-DQN methods along with the results from solving the MIS ILP in \eqref{eqn: MIS ILP} using CPLEX. Columns 4 to 8 of Table~\ref{tbl: comparison} present the size of the found MIS. The results, other than the CPLEX ILP, reported for the baselines are obtained from Table 5 of \cite{li2018combinatorial}.

We briefly describe the reduction techniques utilized by these baselines as they contribute significantly to their final results. All three methods remove pendent, unconfined, and twin vertices and utilize vertex folding for degree-2 nodes. Additional reductions are also considered in ReduMIS, including finding node alternatives, using packing constraints, and adopting the same LP MIS relaxation reduction we adopt in this paper. We refer the interested reader to \cite{akiba2016branch} for a thorough discussion of these reductions. 

In all datasets considered, our method outperforms S2V-DQN. 
Our method performs mostly on par with ReduMIS and GCN, both of which yield identical results. This is observed as exact MIS sizes are obtained for bitcoin-alpha, Wiki-Vote, soc-slashdot0811, Cora, and PubMed. While scoring higher for bitcoin-otc, our method scores lower for soc-slashdot0922, soc-Epinions1, and Citeseer. When compared to the ILP solver, our method outperforms CPLEX on bitcoin-otc and Citeseer while performing on par for all the other instances other than soc-slashdot0922 and soc-Epinions.


\subsection{Results on Synthetic Graphs}
In this section, we compare our proposed method to ReduMIS and CPLEX using random graphs generated from the ER, BA, and HK models. Every size reported in Table~\ref{tbl: synthetic graphs comparison} represents the average of two random graphs from the model given in the first column. For ER, $p$ represents the probability of an edge being present. For BA, we use $\lfloor 0.5n \rfloor$ and $\lfloor 0.45n \rfloor$ edges to attach a new node to existing nodes. For HK, we add $\lfloor 0.3n \rfloor$ 
random edges to each new node 
and set the probability of adding a triangle after adding a random edge to $0.5$. For SBM, we generate graphs with 5 clusters, where two nodes from the same cluster share an edge with probability $p$ (intra-cluster density) and two nodes from different clusters share an edge with probability $q= 0.05$ (inter-cluster density). 
The random graph parameters are selected to yield high-density graphs relative to the earlier benchmarks. As shown, in all the considered random high-density graphs, we perform on par with both CPLEX and ReduMIS on average. For clustered graphs from the SBM, we outperform both CPLEX and ReduMIS.

\section{Conclusion}
We presented a dataless differentiable methodology for solving \textbf{NP-hard} problems that is radically different from existing techniques. The underpinning of our approach is a reduction from the Maximum Independent Set (MIS) problem to an equivalent dataless Neural Network (dNN) constructed from the given graph. The parameters of this dNN are trained without requiring data, thereby setting our approach apart from learning-based methods like supervised, unsupervised, and reinforcement learning. In particular, training is conducted by applying backpropagation to a loss function defined entirely based on the structure of the given graph. We also presented an enhanced version of the dNN by incorporating the edges from the complement graph and exploiting the duality of the Maximum Clique (MC) and MIS problems. Additionally, 
we developed a reduction procedure that leverages a community detection algorithm to scale our approach to larger and higher-density graphs. Unlike previous reductions, the procedure is independent of the type of the graph and its density. 
Experimental results on real-world benchmarks demonstrate that our proposed method performs on par with state-of-the-art learning-based methods without requiring any training data. Furthermore, for higher-density graphs, where learning-based methods are not applicable, we have shown that our method performs on par with, or outperforms, the state-of-the-art methods.




\bibliographystyle{IEEEbib}
\bibliography{ref}

\end{document}